\documentclass{article}

\usepackage{subcaption}
\usepackage{graphicx}
\usepackage{float}
\usepackage{placeins}
\usepackage{amsmath}
\usepackage{booktabs}
\usepackage{multirow} 
\usepackage{theorem}
\usepackage{tikz}
\usepackage{tikz-cd}
\usepackage[dvipsnames]{xcolor}
\usetikzlibrary{shapes.geometric, arrows, positioning, arrows.meta, calc, backgrounds, matrix, shapes.symbols, shapes.misc, decorations.markings, shadows, decorations.pathreplacing, backgrounds}
\pgfdeclarelayer{background}
\pgfsetlayers{background,main}

\usepackage{tcolorbox}
\definecolor{DodgerUniformBlue}{rgb}{0.0,0.353,0.612}
\newcommand{\define}[1]{\emph{\textcolor{DodgerUniformBlue}{#1}}}

\newcommand{\openbox}{\leavevmode
  \hbox to.77778em{%
  \hfil\vrule
  \vbox to.675em{\hrule width.6em\vfil\hrule}%
  \vrule\hfil}}

\def\endupshapeenvironment{%
  \ifvmode\else\hfill\quad\fi
  \hbox{\openbox}}
\def\qed{\hfill\openbox}

\providecommand{\mytheoremheaderfont}{\upshape\bfseries}
\theoremheaderfont{\mytheoremheaderfont}

\theoremstyle{plain}
\theorembodyfont{\slshape}
\newtheorem{theorem}{Theorem}[section]

\theorembodyfont{\upshape}
\newtheorem{rmdefinition}[theorem]{Definition}
\newtheorem{rmprocedure}[theorem]{Procedure}

\newtheorem{rmexample}[theorem]{Example}
\newtheorem{rmnote}[theorem]{Note}
\newtheorem{rmconvention}[theorem]{Convention}
\newtheorem{rmalgorithm}[theorem]{Algorithm}
\newtheorem{rmremark}[theorem]{Remark}

\newenvironment{algorithm}{\begin{rmalgorithm}}{\endupshapeenvironment\end{rmalgorithm}}

\newenvironment{proof}
        {\begin{trivlist} \item[] \textsl{Proof.}}
        {\qed \end{trivlist}}

\newcommand{\repeattheorem}[2]{\renewcommand{#2}{\protect\ref{#1}}}
\newcommand{\starversion}[3]{
\newenvironment{#2}[2][\foo]%
   {\repeattheorem{##2}{#3}%
     \def\foo{void}\ifx##1\foo%
            \begin{#1}%
            \else\begin{#1}[##1]\fi}%
   {\end{#1}}}

\starversion{theorem}{theorem*}{\thetheorem}
\starversion{lemma}{lemma*}{\thetheorem}
\starversion{proposition}{proposition*}{\thetheorem}
\starversion{corollary}{corollary*}{\thetheorem}

\usepackage{algorithm} 
\usepackage{algorithmic}

\usepackage[position, final]{neurips_2026}

\usepackage[utf8]{inputenc} 
\usepackage[T1]{fontenc}    
\usepackage{hyperref}       
\usepackage{url}            
\usepackage{booktabs}       
\usepackage{amsfonts}       
\usepackage{nicefrac}       
\usepackage{microtype}      
\usepackage{xcolor}         

\title{Beyond State-of-the-Art: Standardising Environmental Impact Metrics for AI Research}

\author{%
 Lachlan McGinness \\
 School of Computing, Australian National University\\
  Commonwealth Scientific and Industrial Research Organisation \\
  \texttt{lachlan.mcginness@anu.edu.au} \\
  \And
  Dan Pagendam \\
  Commonwealth Scientific and Industrial Research Organisation \\
  \AND
  Robert Offner \\
  School of Computing, Australian National University\\
}

\begin{document}

\maketitle

\begin{abstract}
  As the capabilities and ubiquity of Large Language Models (LLMs) grow, so does their environmental footprint. 
  Despite calls for responsible AI, the machine learning community lacks standardised practices for carbon accounting. 
  Our automated literature review of the 5,285 papers accepted to NeurIPS 2025 reveals that reporting of environmental impact is nearly non-existent. 
  To catalyse a shift toward sustainable AI, we define standardised sustainability metrics for evaluating model training efficiency, accompanied by simple heuristics to estimate the carbon cost of LLM inference. 
  We implement these metrics in \texttt{carbonbenchmark}, a drop-in software solution for tracking and reporting emissions. 
  Finally, to combat the pursuit of marginal accuracy gains at disproportionate environmental costs, we formalise the `Smallest Model that Achieves the Job' (SMAJ), a framework which challenges the field to prioritise computational efficiency and environmental accountability alongside traditional `State-of-the-Art' (SotA) accuracy.
\end{abstract}

Total Carbon Emissions of this Study: $2.49$kg of CO$_2$

\section{Introduction}

In 2026, AI headlines such as `An engineer at OpenAI processed 210 billion tokens in one week' or `At Meta AI total employee usage exceeded 60 trillion tokens' have become commonplace. 
But there is an invisible cost to the enormous number of tokens generated by Large Language Models (LLMs): carbon dioxide.
But exactly how much CO$_2$ is 210 billion tokens? 
How much energy are Machine Learning (ML) and LLMs using?
The first step in allowing people to determine whether this is acceptable or not is to quantify and accurately report the carbon footprint of AI models.

AI and machine learning research has embraced models so large that they are now at risk of becoming a serious contributor to anthropogenic global warming \cite{strubell2019energy}.  There is much talk of \emph{responsible AI} in research and industry (see for example \cite{hartman2025rad}), and of fundamental importance in this conversation is managing its ecological footprint, referred to as \emph{sustainable AI}\cite{vanwynsberghe2021sustainable}.  We believe that our research field needs to overcome three important issues: 
(i) foundation models need to become transparent in providing an individual with reliable estimates of their greenhouse gas emissions (in tonnes of CO$_2$ equivalent); 
(ii) reporting of contributions to greenhouse gas emissions from research needs to become a standard metric reported in research papers; and 
(iii) we need increased awareness that research advances are not justified at any cost to the environment.  We argue that tackling these three challenges will help to drive AI research towards computational efficiency, thoughtful usage, and ``greener'' hardware/infrastructure.

The first point we raise asks for greater transparency from the technology industry (a significant player in this research field).  
We propose that users have a right to understanding the environmental footprint of the AI tools that they use and to balance the choice of foundation model against these impacts.  Such transparency could be a driver of sustainability across the AI industry, and could be a key driver of AI infrastructure adopting sustainable electricity sources, and embracing a mindset that balances industrial competition with accountability and responsibility.

Our second point speaks to the need for reporting of carbon accounting and sustainability metrics for research conducted in the present.  
This is important if we are to be honest and accountable for our impact on the environment.  
When (i) is achieved, researchers, conferences, journals, and funding bodies are empowered to report carbon footprints associated with our science, and to weigh these against societal benefits.

We argue that as a rapidly growing research field, AI has the power to do much good, but it should embrace environmental accountability if it is to truly embrace being responsible and sustainable in this endeavor.  To this end, we need to increase researcher awareness around the ecological footprint that their research leaves and for them to exercise caution and conservatism when conducting studies.  In the same way that researchers in medical research use statistical power calculations to reduce the number of subjects in a clinical trial, AI researchers should be encouraged to make choices that reduce the mass of carbon that their research emits to the atmosphere whilst still achieving desirable research outcomes.

Although environmental transparency for many commercially available LLMs is currently lacking, this paper provides simple methods that researchers can adopt to estimate the carbon footprints from their studies.  
This enables researchers to be proactive about adopting principles of \emph{sustainable AI} and encouraging the AI industry to meet our need for greater environmental transparency.  
In addition, we advocate for the reporting of carbon emissions of all research; and introduce a training sustainability metric ($k$) that can provide a useful way to assess and compare the carbon efficiency of new ML methods that learn iteratively.  
This is a step towards finding a metric that can be used by downstream users of AI technologies to assess the computational tradeoff between accuracy and carbon footprint.  

We present both theoretical and empirical results using well-established models and benchmarks to demonstrate how $k$ can be used to determine training efficiency. 
A further contribution of this paper is in defining the concept of `Smallest Model that Achieves the Job' (SMAJ) as an alternative concept to `State-of-the-Art' (SotA; a term that encourages marginal gains in accuracy without regards to computational cost), and demonstrating how small pilot studies and statistical methods can be used to determine SMAJ among a set of candidate models.

\section{LLM Literature Review}\label{section:LLM_Lit_Review}

To better understand what portion of the NeurIPS community use LLMs, measure computational expense, and report environmental impact, we conducted an automated literature review of the 5285 papers accepted to the main track of NeurIPS 2025. 
We extracted relevant information from each of the papers in an automated way using LLMs. 
First we chose a random selection of 200 papers and tested eight different LLMs of different sizes on these. We searched for the following markers:
\begin{enumerate}
    \item \textbf{Did the paper use LLMs? (Yes/No):} An indicator of whether the paper reported using LLMs as a part of the study.
    \item \textbf{Types of LLM Usage:} In what ways were researchers using LLMs in their studies? There were five categories and each paper could respond yes to multiple categories: 
    \begin{itemize}
        \item \textbf{Architecture/Pre-training (Yes/No)} - if the paper proposes a new model
        \item \textbf{Fine-tuning (Yes/No)} - if the paper adjusts an existing model for a specific task
        \item \textbf{Inference/Evaluation (Yes/No)} - if the paper uses the LLM as a judge for labeling data
        \item \textbf{Synthetic Data Generation (Yes/No)} - if the paper uses LLMs to create training data
        \item \textbf{Agentic Frameworks (Yes/No)} - if the LLM is used for reasoning, solving a problem or completing a task
    \end{itemize}
    \item \textbf{Were the LLMs Open Source (Yes/No):} If the LLMs used were open source?
    \item \textbf{Computational Cost (Yes/No):} To determine if the paper reports the computational cost of its methods (even if the paper is not about LLMs). 
    \item \textbf{Sustainability (Yes/No):} To determine whether the carbon footprint or any other environmental impacts were mentioned in the paper.
\end{enumerate}
We calculate the percentage agreement between each of the eight LLMs across all of the papers and markers. The results are shown in the Figure \ref{fig:agreement_heatmap} heatmap and further discussed in Appendices \ref{appendix:literature_review} and \ref{section:appendix_statistics}.
\begin{figure}[ht]
    \centering
    \includegraphics[width=0.7\textwidth]{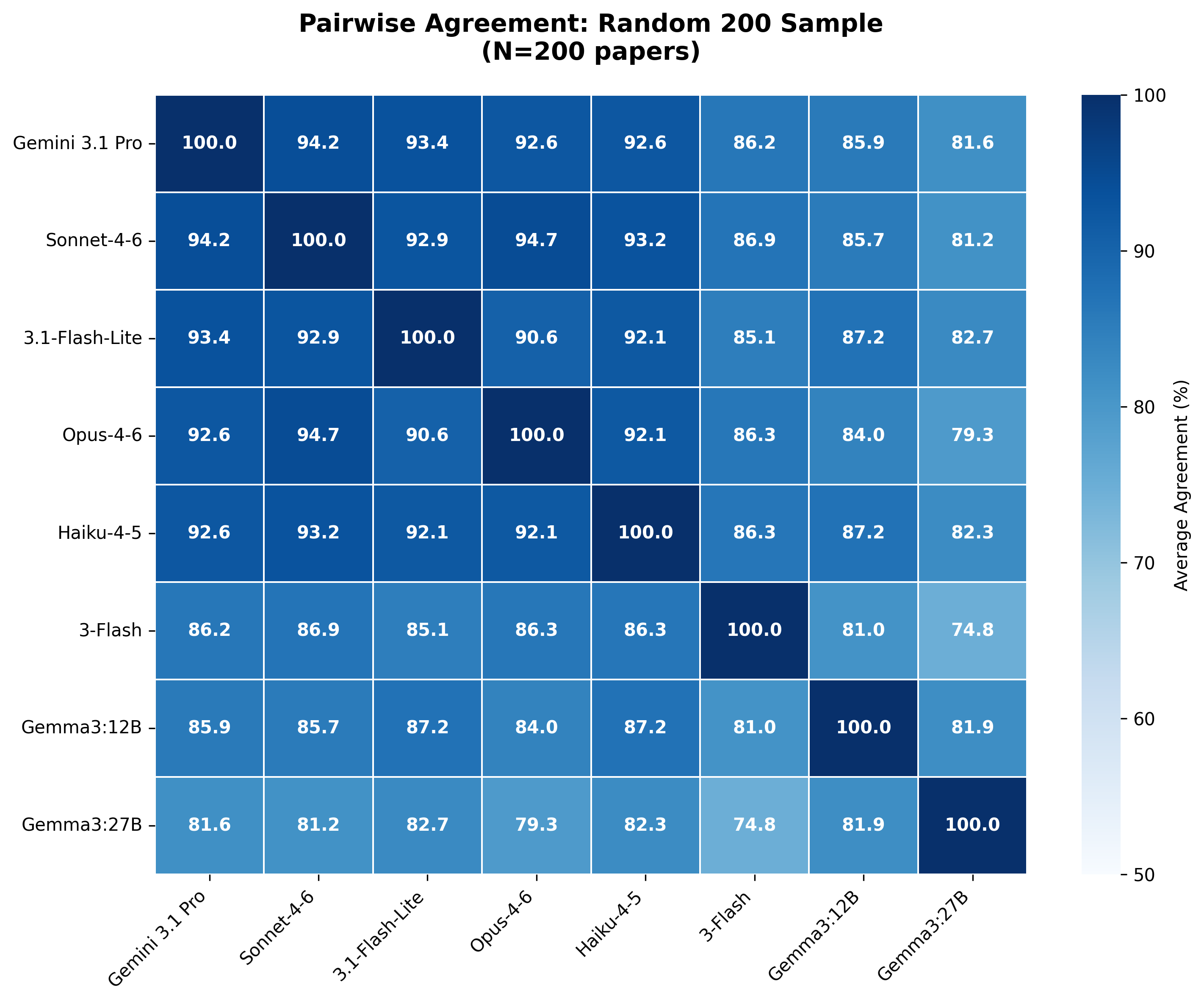}
    \caption{Heatmap showing agreement levels between each of the models on extracted data from 200 randomly selected papers. 
    There is strong agreement ($90\%+$) between the five strongest models.
    Therefore in order to reduce computational cost (and therefore environmental impact), the smallest of these five models (Gemini 3.1-Flash-Lite) was chosen to conduct the literature review.}
    \label{fig:agreement_heatmap}
\end{figure}

In Appendix \ref{section:appendix_statistics} we undertake a statistical analysis, whereby each LLM is compared to a State of the Art (SotA) baseline model (here chosen to be Gemini 3.1 Pro).  
We conclude that there is no statistically significant difference between Gemini 3.1 Pro and each of Sonnet-4.6, Gemini 3.1-Flash-Lite, Opus-4.6, and Haiku-4.5.  
Using the philosophy of Smallest Model that Achieves the Job  ({SMAJ} see Section \ref{sec:SMAJ}) we chose to use Gemini 3.1 Flash Lite to review the remaining papers.

The review found that approximately 40\% of papers accepted for NeurIPS 2025 used LLMs in their methods with the most common uses being Fine Tuning (22\% of papers), Inference and Evaluation (19\%), and Agentic Frameworks (17\%). 
More than half the the papers which used LLMs used open-source models (24\% of total papers).
Approximately 40\% of papers reported the computational cost of their methods in some way, regardless of whether they used LLMs or not.
But almost no papers (approximately 1\%) mention sustainability or considered the environmental impacts of their methods. 
In the small number of papers where sustainability is considered, there is no standard method for reporting computational cost, power usage or CO$_2$ released. 

We propose a submission requirement that every paper reports the total mass of carbon dioxide emitted by its methods and experiments. 
We believe that just as the act of recording what you eat can cause people to improve their diet, requiring researchers to publish their CO$_2$ usage will result in more environmentally friendly research.
In the following section we outline more specific measures that could be used to quantify the sustainability of training ML models iteratively.

\section{Theoretical Underpinnings of Sustainability Benchmarks}

Previous works have suggested that metrics that present the CO$_2$ emissions per percentage point of accuracy could be a starting point for determining the efficiency or sustainability of a ML method \cite{Eimler2026Environmental}. As these methods are likely to reward under-trained models, an alternate metric has been proposed \cite{Eimler2026Environmental}:
\begin{equation}
    \label{eq:AIED}
    k_{\text{sustainability}} = \frac{\log_e(1-A)}{t}
\end{equation}
Where $k_{\text{sustainability}}$ is a constant representing the sustainability of training, $t$ is a measure of computational expense (for example FLOPs, Joules or grams of CO$_2$) and $A$ is the maximum accuracy achieved in training.
This metric has some limitations: firstly the formula breaks when an accuracy $A=1$ is achieved. 
Secondly, it was derived assuming that the learning curve  takes the form $A=1-e^{-k_\text{sustiainability}t}$ and will likely reward under-trained models when the highest accuracy that can be achieved by any method is significantly less than 1.
Finally it assumes that the accuracy takes an exponential form achieved by Sorscher \cite{Sorscher2022Beyond} rather than the almost universally-assumed, less efficient, power law form \cite{kaplan_scaling_2020,Hoffmann2022Training} given by Equation \ref{eq:powerlaw}. 

The first two concerns can be addressed by assuming that the gap between achieved accuracy and maximum potential accuracy decays exponentially as a function of $t$, so that the learning curve takes the form: 
\begin{equation}
\label{eq:exponential}
 A=A_{\text{max}}(1-e^{-kt})
\end{equation}
This equation has two parameters which must be fitted from the data: $A_{\text{max}}$ (related to accuracy) and $k$ (related to training sustainability). 
Unfortunately this equation cannot be solved analytically for $k$ to produce an equation like \ref{eq:AIED}.

\begin{theorem}
\label{theorem:exponential}
    No general closed-form, analytic solution for training efficiency parameter $k$ can be determined from known values of $A$ and $t$, if we assume the learning curve takes the form $ A=A_{\text{max}}(1-e^{-kt})$ (Equation \ref{eq:exponential}). 
\end{theorem}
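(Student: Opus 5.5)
The plan is to make precise what ``closed-form, analytic solution'' means and then reduce the question to one where Abel--Ruffini applies. We will say that $k$ has a closed form if it can be written as a finite expression in the observed data, using arithmetic operations, radicals, $\exp$ and $\log$. With a single observation $(A,t)$ and $A_{\text{max}}$ unknown, $k$ is not even determined, since any $k>0$ is matched by $A_{\text{max}} = A/(1-e^{-kt})$. So the minimal meaningful setting is two observations $(A_1,t_1)$ and $(A_2,t_2)$ with $t_2>t_1$, from which both parameters of Equation \ref{eq:exponential} are fixed. Dividing the two instances of Equation \ref{eq:exponential} eliminates $A_{\text{max}}$:
\begin{equation*}
\frac{A_2}{A_1} = \frac{1-e^{-kt_2}}{1-e^{-kt_1}} .
\end{equation*}

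Next we specialise, because a general formula must in particular work on special instances. Take $t_2 = r t_1$ with $r$ a positive integer, and substitute $x = e^{-kt_1}\in(0,1)$. The relation becomes $A_1(1-x^r) = A_2(1-x)$. Discarding the spurious root $x=1$ leaves the polynomial equation
\begin{equation*}
1 + x + x^2 + \cdots + x^{r-1} = c, \qquad c = \frac{A_2}{A_1}.
\end{equation*}
Any closed form for $k$ would then yield $x = e^{-kt_1}$ as an expression built by the same operations from the rational data. Here we must argue that $\exp$ and $\log$ cannot help beyond radicals when the data are rational and the answer is algebraic; that is, we work in the algebraic setting and appeal to the Abel--Ruffini / Galois framework. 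If that appeal is contested, we fall back to the explicit claim that ``no expression in radicals of $A_1,A_2,t_1,t_2$ exists.''

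The key step is to take $r=6$, so that $p(x) = x^5+x^4+x^3+x^2+x+1-c$ is a quintic. We then exhibit one rational $c$ (equivalently, admissible accuracies $A_1<A_2$ with $1<c<6$, so that a root lies in $(0,1)$) for which $p$ has Galois group $S_5$. We will use the standard criterion: an irreducible quintic over $\mathbb{Q}$ with exactly three real roots has Galois group $S_5$. Alternatively, we reduce $p$ modulo a few primes to obtain a $5$-cycle and a transposition. Since $S_5$ is not solvable, the root $x$, and hence $k = -\log(x)/t_1$, admits no expression in radicals. A universal formula valid for all data would specialise to this instance, so no general closed form exists. For non-integer or irrational ratios $t_2/t_1$ the equation is transcendental, which only strengthens the conclusion, though we do not need that case.

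The main obstacle is this concrete verification of the Galois group. The derivative $p'(x) = 5x^4+4x^3+3x^2+2x+1$ has no real roots, so $p$ is strictly increasing and has exactly one real root. The ``three real roots'' shortcut therefore fails. Instead we will search small rational $c$ (for instance $c=2$, giving $x^5+x^4+x^3+x^2+x-1$), check irreducibility by reduction mod $2$ or $3$, and find factorisation patterns mod small primes. Our target is a prime where $p$ factors as an irreducible quadratic times distinct irreducible linear or cubic factors, which yields a transposition in the Galois group, alongside a prime where $p$ is irreducible, which yields a $5$-cycle. Checking the discriminant is not a square gives a supporting sanity check.
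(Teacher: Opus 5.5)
Your proposal is correct and follows the same route as the paper: two observations, eliminate $A_{\text{max}}$ by division, restrict to a rational ratio $t_2/t_1$ to get a polynomial equation, then appeal to Abel--Ruffini. It is, however, more careful than the paper's proof, which applies Abel--Ruffini directly to the trinomial $A_1y-A_2y^n=A_1-A_2$ without noticing that $y=1$ is always a root (so the relevant degree is $n-1$, not $n$), and without verifying that any member of this special family has a non-solvable Galois group.

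Your search already succeeds at $c=2$: $x^5+x^4+x^3+x^2+x-1$ has no rational roots, and a short coefficient comparison rules out a monic integer quadratic--cubic factorisation, so it is irreducible over $\mathbb{Q}$ (reduction mod $2$ gives $(x+1)(x^2+x+1)^2$ and is useless here), while modulo $3$ it factors as $(x^2+2x+2)(x^3+2x^2+x+1)$, so by Dedekind the Galois group contains an element of cycle type $(2,3)$, whose cube is a transposition, together with a $5$-cycle, and is therefore $S_5$.
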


For proof of Theorem \ref{theorem:exponential}, see Appendix \ref{Appendix:proof_of_theorems}.
As we are interested in finding a method that will allow the general reporting of training efficiency, we instead recommend that $k$ and $A_{max}$ can be obtained by least squares fitting to the learning curve as demonstrated in Figure \ref{fig:vision_vit}.

\begin{figure}[htbp]
    \centering
    \includegraphics[width=\textwidth]{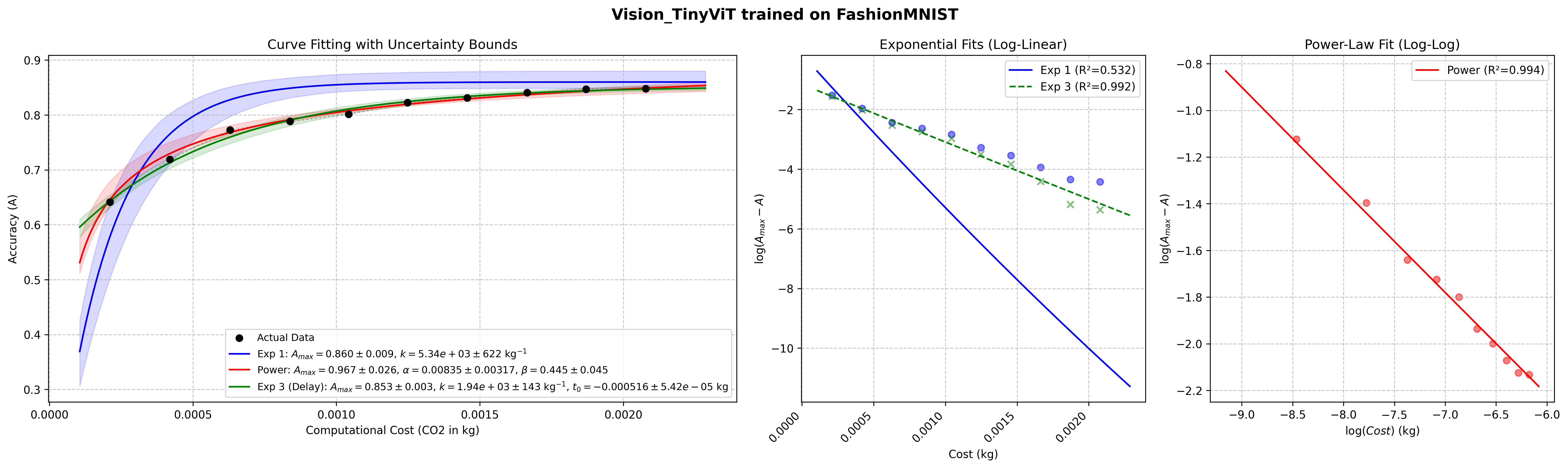}
    \caption{Learning curve and mathematical fits evaluating the computational efficiency of TinyViT trained on the FashionMNIST dataset. Blue curves show the simple exponential model of training accuracy, green shows the delayed exponential model, and red shows the more accurate power-law model.  This demonstrates that both the power law and delayed exponential functions can accurately fit the learning curve, allowing $k$ and $\beta$ values to be used as measures of training efficiency. }
    \label{fig:vision_vit}
\end{figure}

Figure \ref{fig:vision_vit} also demonstrates that the simple exponential function (depicted in blue) from Equation \ref{eq:exponential} does not fit the learning curve well. That is because the functional form forces the curve to pass through $(0,0)$ which in practice means that $A_{\text{max}}$ is largely setting the fit for the final points and $k$ is very heavily influenced by the first data point. In order to repair this and allow for better fitting of the exponential function, we recommend allowing for a translation of $t$ by $t_0$ which relaxes the requirement of the curve passing through $(0,0)$, as shown in Equation \ref{eq:delayed_exponential}.

\begin{equation}
\label{eq:delayed_exponential}
 A=A_{\text{max}}(1-e^{-k(t-t_0)})
\end{equation}

We refer to this model as the delayed exponential model.  
Equations \ref{eq:exponential} and \ref{eq:delayed_exponential} are simple ways to determine training efficiency as there is still just a single parameter ($k$) which describes the sustainability of training.
By contrast, the more commonly used power law (Equation \ref{eq:powerlaw}) has three parameters $A_{\text{max}}$, $\alpha$, and $\beta$. 
\begin{equation}
\label{eq:powerlaw}
  A = A_{\text{max}}(1 - \alpha t^{-\beta}) 
\end{equation}
If we assume that $A_{\text{max}}$ is close to $1$, then Equation \ref{eq:powerlaw} recovers the normal form expressed in error rate which is most regularly reported \cite{kaplan_scaling_2020,Hoffmann2022Training} as  
\begin{equation*}
    E(t) \approx 1 - A = \alpha t^{-\beta}
\end{equation*}
Where $E(t)$ is the error rate as a function of training. 
The assumption of error rate approaching zero or (accuracy approaching 1) leads to the regime of irreducible error where power law scaling has been observed to break down. 
By explicitly introducing an irreducible error rate, Khorasani et. al. demonstrate that the power law for reducible loss continues even when total loss plateaus \cite{Khorasani2026Beyond}.
This indicates that $A_{\text{max}}$ (or irreducible loss) is required to accurately model power law scaling. 

For Equation \ref{eq:powerlaw}, $A_{\text{max}}$ is still an estimate of the highest accuracy which is achieved after infinite training.
$\alpha$ is a measure of position of the model in early training, specifically how far the model is away from $A_{\text{max}}$ after expending $t=1$ training resources and therefore has units of inverse $t^\beta$.
As the units of $\alpha$ depend on $\beta$, this makes it much more difficult to interpret and compare with $\alpha$ values for other models.
By contrast $\beta$ is dimensionless and indicates how quickly $A_{max}$ will be approached in the limits of large amounts of training. 
Therefore we debate that both $\alpha$ and $\beta$ are important parameters when reporting training efficiency.
Similar to the exponential fit to the training curve, it is not possible to analytically solve for  $\alpha$ and $\beta$ from training data.

\begin{theorem}
\label{theorem:power_law}
    No general closed-form, analytic solution for training efficiency parameter $\beta$ can be determined from known values of $A$ and $t$, if we assume the learning curve takes the form $ A = A_{\text{max}}(1 - \alpha t^{-\beta})$ (Equation \ref{eq:powerlaw}). 
\end{theorem}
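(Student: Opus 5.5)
We interpret the theorem as saying that when the unknown parameters $(A_{\text{max}},\alpha,\beta)$ of Equation \ref{eq:powerlaw} are determined from observed pairs $(t_i,A_i)$, one cannot isolate $\beta$ in terms of elementary functions (or even standard special functions such as Lambert $W$) of the data. The approach mirrors the proof of Theorem \ref{theorem:exponential}. The power law has three unknowns, so the minimal determining data are three points $(t_1,A_1),(t_2,A_2),(t_3,A_3)$ with distinct $t_i>0$. A closed form for $\beta$ in the general case would in particular give one in this exactly-determined case, so it is enough to show that this case admits no closed form.

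\textbf{Eliminating $A_{\text{max}}$ and $\alpha$.} From $A_i = A_{\text{max}} - A_{\text{max}}\alpha t_i^{-\beta}$ the equations are linear in $A_{\text{max}}$ and in $c:=A_{\text{max}}\alpha$. Subtracting pairs gives $A_j - A_i = c\,(t_i^{-\beta}-t_j^{-\beta})$, and taking the ratio removes $c$:
\begin{equation*}
 \frac{A_2-A_1}{A_3-A_2} = \frac{t_1^{-\beta}-t_2^{-\beta}}{t_2^{-\beta}-t_3^{-\beta}} .
\end{equation*}
Thus $\beta$ is exactly a root of a generalized exponential polynomial $R\,(x_2-x_3) = x_1 - x_2$ with $x_i = e^{-\beta\ln t_i}$. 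This has the same structure as the equation that appears in the delayed or three-point exponential analysis behind Theorem \ref{theorem:exponential}, with $\ln t_i$ playing the role of $t_i$.

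\textbf{Showing there is no closed form.} When the $\ln t_i$ are commensurate (for example $t_i = t_1 r^{\,m_i}$ with integers $m_i$), substituting $y=r^{-\beta}$ turns the equation into a polynomial in $y$ whose degree is set by the gaps $m_i$. Choosing integer gaps, e.g.\ $m=(0,1,5)$, gives a quintic such as $y^5 - (1+R)y + R = 0$ after factoring out the trivial root $y=1$. Its Galois group is $S_5$ for suitable rational $R$, which can be checked via Dedekind reduction mod primes. So $y$, and hence $\beta=-\ln y/\ln r$, is not expressible by radicals. For incommensurate $\ln t_i$ the equation is transcendental (a sum of three exponentials). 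It lies outside Lambert-$W$ solvability, which covers only equations of the form $a e^{x} + bx + c=0$, and we cite the Ritt/Rosenlicht Liouville-type theory or simply the generality failure. Either case alone suffices to refute a \emph{general} closed form.

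\textbf{Main obstacle.} The hardest step is making ``closed-form, analytic'' precise enough that the argument is a real proof. I would fix the meaning as ``expressible by radicals from the data,'' and then the Galois obstruction in the commensurate subfamily does the work cleanly. The remaining effort is exhibiting one explicit rational $R$ for which the quintic has group $S_5$. The other check is that the trivial root $y=1$ (corresponding to $\beta=0$) is correctly excluded, so that the root of interest really lies in the irreducible factor.
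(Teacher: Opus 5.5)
Your overall route is the paper's. You eliminate $A_{\text{max}}$ and $\alpha$ by subtraction and division; your ratio is correct, while the paper's displayed ratio has an index slip. You then obtain a three-term exponential sum in $\beta$ with exponents $\ln t_i$, and in the commensurate case reduce to a polynomial and invoke Abel--Ruffini.

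\textbf{The gap is in the concrete witness that is meant to carry the argument.} With $t_i=t_1r^{m_i}$, $m=(0,1,5)$ and $y=r^{-\beta}$, your equation $R(x_2-x_3)=x_1-x_2$ becomes $R(y-y^5)=1-y$, i.e.\ $Ry^5-(R+1)y+1=0$. This is your $y^5-(1+R)y+R=0$ after $R\mapsto 1/R$. That quintic is the polynomial \emph{before} the trivial root is removed, not after: it vanishes at $y=1$ for every $R$, and
\[
y^5-(1+R)y+R=(y-1)\,(y^4+y^3+y^2+y-R).
\]
So it is reducible and its Galois group is never $S_5$. The root you want satisfies a quartic, which Ferrari's formula solves by radicals. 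On this subfamily $\beta=-\ln y/\ln r$ \emph{does} have a closed form, so the step ``Galois group $S_5$ for suitable rational $R$'' is false. In general $m=(0,a,b)$ gives $Ry^b-(R+1)y^a+1$, whose nontrivial factor has degree $b-1$, so you need $b\ge 6$. The paper's proof, which defers to the argument for Theorem~\ref{theorem:exponential}, has the same off-by-one: its threshold $n<5$ ignores the root $y=1$.

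\textbf{Repair.} Take $m=(0,1,6)$. Then $R(y-y^6)=1-y$ factors as $(y-1)\bigl(R(y^5+y^4+y^3+y^2+y)-1\bigr)=0$. For $R=1$ (equal accuracy increments) the cofactor is $f(y)=y^5+y^4+y^3+y^2+y-1$.
\begin{itemize}
\item $f$ has no rational root, since $f(1)=4$ and $f(-1)=-2$.
\item Writing $f=(y^2+ay+b)(y^3+cy^2+dy+e)$ over $\mathbb{Z}$ with $be=-1$ forces $a^2-2a-1=0$ or $a^2-2a+3=0$, neither of which has an integer root. Hence $f$ is irreducible over $\mathbb{Q}$.
\item Modulo $3$, $f$ splits into distinct irreducibles $(y^2+2y+2)(y^3+2y^2+y+1)$. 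By Dedekind the Galois group contains an element of cycle type $(2,3)$, whose cube is a transposition.
\item A transitive subgroup of $S_5$ containing a transposition is $S_5$.
\item Since $f(0)=-1<0<f(1)$, the relevant root lies in $(0,1)$ and corresponds to a genuine $\beta>0$ with $r>1$.
\end{itemize}
With this witness your argument goes through and is more rigorous than the paper's. Here ``closed form'' should be read as $\beta=-\ln(\text{radical expression in the data})/\ln r$, since $\beta$ is itself a logarithm. Your incommensurate-case appeal to Ritt/Rosenlicht is unsupported, but it is also unnecessary.
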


For proof of Theorem \ref{theorem:power_law} see Appendix \ref{Appendix:proof_of_theorems}.
Any machine learning process which produces a training curve can use 
a least squares fit to find $\alpha$ and $\beta$, similar to $k$. 
Although it is likely that a power law curve will be a better fit to training data than an exponential function, pairs of $\alpha$ and $\beta$ values are not as easily compared or interpreted as the single number $k$. 
Therefore our recommendation is that reporting of carbon emissions for a research study is mandatory and we recommend the reporting of $k$-values as the standardised benchmark to report the carbon efficiency of an ML method. 
If researchers feel that their combination of $\alpha$ and $\beta$ better represents their represent their training efficiency they could choose to additionally report this.

\subsection{Numerically Fitting the Learning Curve}
\label{sec:numerical_fitting}

As it has been shown that the exponential and power law functions cannot be fit to learning curves analytically, we outline the well-established least squares procedure for achieving this numerically in Appendix \ref{appendix:least_squares}. 

We provide a repository \texttt{carbonbenchmark}\footnote{https://github.com/LudvikGalois/carbonbenchmark} which uses \texttt{codecarbon} to measure the CO$_2$ emissions before implementing our fitting procedure \cite{codecarbon}.
We also provide a notebook with examples of \texttt{carbonbenchmark} being used to train well-known models on established datasets.
We ran the experiments on a machine with an NVIDIA GeForce RTX 4060 Ti GPU (16GB of VRAM) over a one hour period, but GPUs with 8GB of VRAM are sufficient.
Our examples include tasks in the computer vision (Fashion-MNIST \cite{FMNIST}), NLP (GLUE sst2 \cite{GLUE_Benchmark}), and tabular (UCI adult \cite{UCIdataset}) domains. 
\texttt{carbonbenchmark} provides the user with the learned sustainability parameters in Equations \ref{eq:exponential}, \ref{eq:delayed_exponential}, and \ref{eq:powerlaw}, as well as the final accuracy, the total carbon emissions, and the $R^2$ value for each fit.
Figure \ref{fig:vision_vit} gives an example output showing the fitting  to the learning curve for a vision transformer acting on the FashionMNIST dataset (chosen because it is more challenging that the original MNIST benchmark). 
More examples can be found in Appendix \ref{appendix:additional_results}.

On each of the three tasks, the models with higher values of $k$ required less $CO_2$ for training. This indicates that $k$ is effective at quantifying the CO$_2$ required for training.
We also note that the $t_0$ parameter was required for the exponential function to fit the data. The power-law fit was the best fit to most of the plots, especially after accuracy plateaus. 

\subsection{Smallest Model that Achieves the Job}
\label{sec:SMAJ}

Now we define the \define{Smallest Model that Achieves the Job}, a concept that we propose as essential to minimising cost when deploying machine learning models. 

For a given task let $\mathbf{M}=\{M_1, M_2, ..., M_n\}$ be a set of candidate models.
We represent each model as a tuple $M_i=(A_i,\Delta A_i, C)$, where: 
\begin{itemize}
    \item $A_i$ is the model's expected performance (for example, mean accuracy in testing data, or $A_{\text{max}}$ fitted to the learning curve during training).
    \item $\Delta A_i$ is the uncertainty associated with the performance metric (for example the standard deviation in model's accuracy across testing data, or the uncertainty in $A_{\text{max}}$ during training).
    \item $C$ is the expected or mean computational cost to deploy the model (for example measured in grams $CO_2$, Joules or FLOPs).
\end{itemize}

We define the \define{Highest Performing Model} by Equation \ref{eq:Maximum_Performing} 
\begin{equation}
    \label{eq:Maximum_Performing}
    M_{\text{HP}} = \underset{M_i \in \mathbf{M}}{\operatorname{argmax}}(A_i)
\end{equation}
Recognising that marginal gains in accuracy often occur at dispoportionate computational costs, we propose the Smallest Model that Achieves the Job (SMAJ) as follows.
First we define a set of acceptable models in Equation \ref{eq:acceptable_models}
\begin{equation}
    \label{eq:acceptable_models}
    \mathbf{M}_{\text{acceptable}}=\{M_i \in \mathbf{M} | A_{\text{HP}}-A_i \leq {\sqrt{(\Delta A_{\text{HP}})^2+(\Delta A_i)^2}} \}
\end{equation}
Intuitively, these are models whose accuracy falls within the combined uncertainty bounds of the highest performing model. 
The \define{Smallest Model that Achieves the Job} is defined by Equation \ref{eq:SMAJ}
\begin{equation}
    \label{eq:SMAJ}
    M_{\text{SMAJ}} = \underset{M_i \in \mathbf{M}_{\text{acceptable}}}{\operatorname{argmin}(C_i)}
\end{equation}

This procedure (outlined in Algorithm \ref{alg:smaj}) selects the most computationally efficient model which is statistically indistinguishable from the highest performing model.  In Section \ref{section:LLM_Lit_Review} (and Appendix \ref{section:appendix_statistics}) we demonstrated how small pilot experiments could be used to assess the statistical differences in the performances of models and to identify the SMAJ.
\begin{algorithm}[htbp]
\caption{Selection of the Smallest Model that Achieves the Job (SMAJ)}
\label{alg:smaj}
\begin{algorithmic}[1]
\REQUIRE A set of $n$ evaluated models $\mathbf{M} = \{M_1, M_2, \dots, M_n\}$.
\REQUIRE For each model $M_i$, a tuple $(A_i, \Delta A_i, C_i)$ representing performance, uncertainty, and computational cost.
\ENSURE The SMAJ model $M_{\text{SMAJ}}$.
\STATE \textbf{Initialize:} $\mathbf{M}_{\text{acceptable}} \leftarrow \emptyset$
\STATE \COMMENT{Step 1: Identify the Highest Performing (HP) model}
\STATE $M_{\text{HP}} \leftarrow \text{argmax}_{M_i \in \mathbf{M}} (A_i)$
\STATE Let $A_{\text{HP}}$ and $\Delta A_{\text{HP}}$ be the performance and uncertainty of $M_{\text{HP}}$.
\STATE \COMMENT{Step 2: Filter for models within the statistical uncertainty bound}
\FOR{each $M_i \in \mathbf{M}$}
    \STATE $\text{Tolerance} \leftarrow \sqrt{(\Delta A_{\text{HP}})^2 + (\Delta A_i)^2}$
    \IF{$A_{\text{HP}} - A_i \leq \text{Tolerance}$}
        \STATE $\mathbf{M}_{\text{acceptable}} \leftarrow \mathbf{M}_{\text{acceptable}} \cup \{M_i\}$
    \ENDIF
\ENDFOR
\STATE \COMMENT{Step 3: Select the most computationally efficient model from the acceptable set}
\STATE $M_{\text{SMAJ}} \leftarrow \text{argmin}_{M_i \in \mathbf{M}_{\text{acceptable}}} (C_i)$
\RETURN $M_{\text{SMAJ}}$
\end{algorithmic}
\end{algorithm}

We hope that SMAJ may become as highly sought and commonly used as State of the Art (SotA) in future research studies.

\section{Cost of Running LLMs}

Kaplan et al provided Equation \ref{eq:simple_flops}, an approximate formula for the Floating Point Operations (FLOPs) used by an LLM \cite{kaplan_scaling_2020}. This formula has been verified to be accurate within 10\% of the actual cost in some experiments \cite{Eimler2026Environmental,McGinness2025Imitate}.
\begin{equation}
    \label{eq:simple_flops}
    \text{FLOPs} = 2nN
\end{equation}
Where $n$ is the number of active parameters for a model and $N$ is the sum of context and completion tokens. 
The utility of Equation \ref{eq:simple_flops} is that the user only requires two pieces of information to estimate the computational cost of running an LLM. 

The first is $N$, which is very easy for researchers using LLMs to obtain. 
For local models, (like Ollama), there are inbuilt options for the user to obtain the number of input and output tokens for each call. For commercial providers, including those of proprietary models, users are normally billed by the token through APIs and therefore have access to the number of input and output tokens.

The second parameter, $n$, is well known for most open source models. For example the two open source LLMs used in this study, Gemma 3 12B and Gemma 3 27B have 12 billion and 27 billion active parameters respectively. 
However almost no proprietary models release the number of active parameters, making the exact value of $n$ unobtainable. 
We call for companies that create proprietary LLMs to publish the value of $n$, but in the meantime build on the practical solution originally proposed by Eimler et al \cite{Eimler2026Environmental}.

At the time of writing, open source models like Deepseek-v4-pro \cite{DeepseekV4} are similar in benchmark performance to the best proprietary models. 
If we assume that this means they have a similar number of active parameters, then we can use this as an approximation of $n$ for proprietary models. 
Although this is not a precise method, it allows users to make an order of magnitude estimate for the carbon dioxide emissions. 
Deepseek-v4-pro has approximately 50 billion active parameters, so at the time of writing (early 2026), we are recommending $n=5\times10^{10}$ for frontier models.

Once Equation \ref{eq:simple_flops} has been used to obtain the number of FLOPs, the energy usage can be estimated using publicly released energy/T-FLOP data released by NVIDIA \cite{H100,McGinness2025Can}. 
Then once the energy usage is known, publicly available information about the average CO$_2$ emitted per kilowatt hour in the US (where most AI inference centres are currently located) can be used to estimate the CO$_2$ emitted\cite{CarbonIntensityGovernment,CarbonIntensityGlobal}. This process is summarised by the conversion factor in Equation \ref{eq:FLOP_CO2_conversion} which uses numbers which are reasonable for 2026.

\begin{flalign}
    \label{eq:FLOP_CO2_conversion}
    \text{kg of CO$_2$ emitted} &= n_{\text{FLOP }} \div \frac{10^{-12} \times \text{T-FLOPs}}{\text{KWh}} \times \frac{\text{CO}_2}{\text{KWh}} \nonumber \\ 
    &= n_{\text{FLOP }} \div (6.77\times10^{18} \text{FLOPs/KWh}) \times 0.384 \text{kg/KWh} \nonumber \\
    &= 5.67\times 10^{-20} \times  n_{\text{FLOP }}
\end{flalign}

If we use the assumption that $n=5\times10^{10}$ active parameters and assume that the reported number is the number of input and output tokens combined, we can answer the question posed in the opening paragraph of the introduction: \textit{How much CO$_2$ is released from 210 billion tokens?} Approximately $1200$kg of CO$_2$.

\section{Recommendations, Limitations and Future Work}

We believe that big tech companies should be transparent about the carbon footprint that LLMs (and other foundation models) have, or at very least provide information (such as number of active parameters, $n$) that can be used by researchers to estimate emissions from their use.  
Doing so will empower researchers to make decisions based on sustainability as well as model performance.  
In this vein, a significant focus of this paper is in encouraging researchers to proudly claim Smallest Model that Achieves the Job (SMAJ) as well as (SotA) in their future work. 

A key motivation for the theoretical work was to
improve on previous accuracy/carbon metrics which would reward under-trained models by more accurate modeling of the learning process. 
By separating $A_{\text{max}}$ and $k$ as two separate metrics (one for accuracy and one for training efficiency), we are able to accurately fit learning curves and produce a parameter which corresponds to the sustainability of training.
An area for future work would be to explore setting $A_{\text{max}}$ to the performance of the highest performing model and calculating $k$ values for other models based on this. This would allow $k$ to more effectively reflect carbon per accuracy while accounting for the diminishing returns in iterative training.

One limitation of our study is that we have only demonstrated \texttt{carbonbenchmark} on three relatively lightweight benchmarks with six well-established models.
At this stage it is unclear how well the exponential and power law fitting will apply to more computationally expensive models' learning curves.  
This requires further investigation.  
Furthermore, \texttt{codecarbon} only considers carbon dioxide emissions, focusing on a problem of global significance. However, other measures such as water consumption and amount of energy drawn from the local power grid are also having significant local societal impacts and are worthy of consideration. In future, we hope to extend \texttt{carbonbenchmark} to also include these metrics.

We believe that the reporting of CO$_2$ is a vital first step, however, we acknowledge that using it as a metric to determine the efficiency of research work may unfairly penalise people who live in countries with less-clean power grids or who only have access to less efficient CPUs, GPUs or RAM which require more energy per operation. 
Previous works have called on FLOPs to be reported as a way of leveling the playing field \cite{Eimler2026Environmental}, and while this works for GPU intensive workloads, it is not able to capture the full cost of all ML methods. A final area of future work would be to determine a metric which is both equitable and captures the computational cost of using CPU, GPU and RAM.

\begin{ack}
Use unnumbered first level headings for the acknowledgments. All acknowledgments
go at the end of the paper before the list of references. Moreover, you are required to declare
funding (financial activities supporting the submitted work) and competing interests (related financial activities outside the submitted work).
More information about this disclosure can be found at: \url{https://neurips.cc/Conferences/2026/PaperInformation/FundingDisclosure}.

Do {\bf not} include this section in the anonymized submission, only in the final paper. You can use the \texttt{ack} environment provided in the style file to automatically hide this section in the anonymized submission. 
\end{ack}

\bibliographystyle{plain} 
\small
\bibliography{PhDThesisReferences}

\normalsize

\appendix

\section{Appendix - Proofs of Theorems}
\label{Appendix:proof_of_theorems}

\begin{theorem}
    No general closed-form, analytic solution for training efficiency parameter $k$ can be determined from known values of $A$ and $t$, if we assume the learning curve takes the form $ A=A_{\text{max}}(1-e^{-kt})$ (Equation \ref{eq:exponential}). 
\end{theorem}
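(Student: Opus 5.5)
The claim is about recovering $k$ when $A_{\text{max}}$ is unknown and must be eliminated using several observed pairs $(t_i,A_i)$. If $A_{\text{max}}$ were known, $k=-\log_e(1-A/A_{\text{max}})/t$ would be closed-form, so the statement is only meaningful in this elimination sense. I will show that eliminating $A_{\text{max}}$ leaves a polynomial equation in $x=e^{-k}$ of arbitrarily high degree, with general (non-solvable) Galois group. By Abel--Ruffini, the roots of such an equation cannot be expressed in radicals of the data, and hence neither can $k=-\log_e x$ in general. For the least-squares version the same conclusion follows, because the normal equations contain the same kind of polynomial.

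The steps are as follows.

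\textbf{Step 1.} Take two observations $(t_1,A_1)$ and $(t_2,A_2)$ with integer (or rational) times. Dividing the two copies of Equation \ref{eq:exponential} eliminates $A_{\text{max}}$:
\begin{equation*}
A_2(1-e^{-kt_1}) = A_1(1-e^{-kt_2}).
\end{equation*}

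\textbf{Step 2.} Substitute $x=e^{-k}$. This gives the trinomial equation
\begin{equation*}
A_1 x^{t_2} - A_2 x^{t_1} + (A_2-A_1)=0.
\end{equation*}
It always has the spurious root $x=1$. Dividing it out leaves a polynomial $p(x)$ of degree $t_2-1$.

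\textbf{Step 3.} Exhibit a concrete instance whose Galois group is not solvable. I would first try $t_1=1$, $t_2=6$, with rational $A_1,A_2$ chosen so that $p(x)$ is an irreducible quintic with exactly three real roots. The standard criterion (irreducible, prime degree, exactly two non-real roots) then gives Galois group $S_5$. Irreducibility would be checked by Eisenstein after scaling, or by reduction mod a prime. The count of real roots would follow from Descartes' rule and a sign or Sturm check.

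\textbf{Step 4.} Conclude. Any general closed-form formula $k=F(A_1,A_2,t_1,t_2)$ built from arithmetic operations, radicals, $\exp$ and $\log$ would specialise to this instance. It would then express a root of $p$ in radicals, since $x=e^{-k}$ and $\exp$ and $\log$ only wrap the output. That contradicts the unsolvability established in Step 3.

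The main obstacle is making ``closed-form'' precise enough for the contradiction in Step 4 to be rigorous. Allowing $\exp$ and $\log$ inside the formula, rather than only at the output, requires a Liouville/Ritt-type theorem on elementary functions rather than plain Galois theory. I would state the result for radicals applied to $x=e^{-k}$ and remark that this is the standard sense intended. If Step 3 turns out awkward, I would fall back on pointing out that with $t_1=1$ and general $t_2$, a trinomial of degree $\geq 5$ has generic Galois group $S_{t_2}$, which is not solvable.
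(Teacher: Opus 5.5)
Your reduction (two points, division to eliminate $A_{\text{max}}$, $x=e^{-k}$, a trinomial) is the paper's own. You improve on it by removing the root $x=1$ and by recognising that Abel--Ruffini needs an explicit non-solvable Galois group; the paper just asserts that a degree-$\ge 5$ equation has no radical solution, which that theorem does not say.

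\textbf{The gap: Step 3 cannot be carried out.} With $t_1=1$, $t_2=6$ you get $p(x)=A_1(1+x+\dots+x^5)-A_2$. One checks $(x-1)^2\frac{d}{dx}(1+x+\dots+x^5)=5x^6-6x^5+1$. AM--GM gives $5x^6+1\ge 6|x|^5\ge 6x^5$, with equality only at $x=1$, so this derivative never vanishes. Hence for every $A_1\neq 0$ the polynomial $p$ is strictly monotone and has exactly one real root and four non-real ones. The criterion ``prime degree, exactly two non-real roots $\Rightarrow S_5$'' therefore never applies.

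Changing exponents does not help. Irreducibility of $p$ forces $\gcd(t_1,t_2)=1$: if $d=\gcd(t_1,t_2)>1$, then $1+x+\dots+x^{d-1}$ divides $p$. So degree $5$ forces $\{t_1,t_2\}=\{1,6\}$ or $\{5,6\}$, and the second is the reciprocal polynomial of the first, with the same number of real roots. For prime degree $q\ge 7$, Descartes' rule gives any trinomial at most four nonzero real roots, so $p$ has at most three, far short of the required $q-2$. The real-root criterion is therefore unavailable throughout this family.

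\textbf{The fallback is also misstated.} After scaling, $A_1x^{t_2}-A_2x^{t_1}+(A_2-A_1)$ is the one-parameter family $x^{t_2}-sx^{t_1}+(s-1)$ with $s=A_2/A_1$. It always has the rational root $x=1$, so its group cannot be $S_{t_2}$. Results about the generic two-parameter trinomial $x^n+ax+b$ do not automatically pass to a one-parameter subfamily.

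\textbf{What you actually need.} Take $f(x)=1+x+\dots+x^{t_2-1}$ and $s$ transcendental; you need $f(x)-s$ to have Galois group $S_{t_2-1}$ over $\mathbb{C}(s)$. This can be proved as follows.
\begin{itemize}
\item A critical point $c$ satisfies $(t_2-1)c^{t_2}-t_2c^{t_2-1}+1=0$.
\item These critical points are simple, because the only repeated root of that numerator is $x=1$.
\item The critical values are $f(c)=t_2c^{t_2-1}$. The value determines $c^{t_2-1}$, and then the critical equation determines $c$, so the critical values are pairwise distinct.
\item Hence the monodromy group is transitive (since $f(x)-s$ is linear in $s$) and generated by transpositions, so it is $S_{t_2-1}$. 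This group is non-solvable for $t_2\ge 6$.
\end{itemize}
This gives the ``no general radical formula'' conclusion directly. Hilbert irreducibility then supplies rational instances if you want a concrete one.

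Alternatively, you can certify a specific quintic instance via Dedekind's theorem. Prove $p$ irreducible, so its Galois group contains a $5$-cycle. Then find a prime modulo which $p$ factors as an irreducible quadratic times an irreducible cubic; the cube of the corresponding element is a transposition, and a $5$-cycle with a transposition generates $S_5$.
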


\begin{proof}
    As this equation has two parameters, exactly two data points $\{(t_1, A_1), (t_2, A_2)\}$ are required to solve for both $k$ and $A_{\text{max}}$. Substituting into Equation \ref{eq:exponential} we obtain a series of two simultaneous equations:
    \begin{equation*}
        A_1 = A_{\text{max}}(1-e^{-kt_1})
    \end{equation*}
    \begin{equation*}
        A_2 = A_{\text{max}}(1-e^{-kt_2})
    \end{equation*}
    Dividing these two equations gives $\frac{A_1}{A_2} = \frac{(1-e^{-kt_1})}{(1-e^{-kt_2})}$ which can be rearranged to the form 
    \begin{equation*}
        A_1 e^{-kt_2} - A_2 e^{-kt_1} = A_1 - A_2
    \end{equation*}
    By selecting substitution $x=e^{-k}$, this can be expressed as:
    \begin{equation*}
        A_1 x^{t_2} - A_2 x^{t_1} = A_1 - A_2
    \end{equation*}
    If the ratio of $t_1$ to $t_2$ is rational, we could choose an $n$ s.t. $t_1 = nt_2$ and then substitute $y=x^{t_1}$ to reduce the equation to 
    \begin{equation*}
        A_1 y - A_2 y^{n} = A_1 - A_2
    \end{equation*}
    Which according to the Abel-Ruffini impossibility theorem only has a solution if $n<5$. Therefore there is no general solution.
\end{proof}

We note that if the ratio of $t_1$ to $t_2$ is not rational (likely) then $A_1 x^{t_2} - A_2 x^{t_1} = A_1 - A_2$ is strictly transcendental and there is no way to invert the sum of two incommensurate exponentials.

\begin{theorem}
    No general closed-form, analytic solution for training efficiency parameter $\beta$ can be determined from known values of $A$ and $t$, if we assume the learning curve takes the form $ A = A_{\text{max}}(1 - \alpha t^{-\beta})$ (Equation \ref{eq:powerlaw}). 
\end{theorem}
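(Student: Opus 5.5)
Following the proof of Theorem \ref{theorem:exponential}, we reduce the fitting problem to a single equation in $\beta$ and show that this equation has no general closed-form solution. Equation \ref{eq:powerlaw} has three unknowns ($A_{\text{max}}$, $\alpha$, $\beta$), so three data points $(t_1,A_1),(t_2,A_2),(t_3,A_3)$ with distinct $t_i$ are needed to determine them. Substituting these gives the system $A_i = A_{\text{max}} - A_{\text{max}}\alpha\, t_i^{-\beta}$ for $i=1,2,3$.

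\textbf{Step 1: eliminate $A_{\text{max}}$ and $\alpha$.} Set $c = A_{\text{max}}\alpha$. Subtracting pairs of equations gives $A_1 - A_2 = c\,(t_2^{-\beta} - t_1^{-\beta})$ and $A_2 - A_3 = c\,(t_3^{-\beta} - t_2^{-\beta})$. Dividing the first by the second removes $c$ and yields a single equation in $\beta$:
\begin{equation*}
(A_1 - A_2)\left(t_3^{-\beta} - t_2^{-\beta}\right) = (A_2 - A_3)\left(t_2^{-\beta} - t_1^{-\beta}\right).
\end{equation*}

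\textbf{Step 2: rewrite as a generalised polynomial.} Write $t_i = e^{s_i}$ and set $x = e^{-\beta}$, so that $t_i^{-\beta} = x^{s_i}$. The equation becomes $(A_1-A_2)x^{s_3} - (A_1-A_3)x^{s_2} + (A_2-A_3)x^{s_1} = 0$, a sum of three powers of $x$. This is the same structure as $A_1x^{t_2} - A_2x^{t_1} = A_1 - A_2$ in the proof of Theorem \ref{theorem:exponential}, except that the exponents are now logarithms of the training times.

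\textbf{Step 3: the rational case.} Suppose the ratios $(s_i - s_1)/(s_2 - s_1)$ are rational. Divide through by $x^{s_1}$ and substitute $y = x^{(s_2-s_1)/q}$ for a suitable common denominator $q$. The equation then becomes a trinomial $a y^{m} - b y^{p} + d = 0$ with integer exponents. For suitable choices of the $t_i$ the degree $m$ is at least $5$, and this is where we invoke Abel--Ruffini, exactly as the paper does, to conclude that there is no general radical solution.

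\textbf{Step 4: the irrational case.} When the exponent ratios are irrational, which is the generic case since the $s_i = \log t_i$ are logarithms, the equation is a transcendental exponential polynomial. As in the remark after Theorem \ref{theorem:exponential}, there is then no closed-form inversion.

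\textbf{Main obstacle.} The difficulty is in Step 3. Abel--Ruffini says that the \emph{general} quintic has no radical solution, but the equation here is a trinomial, and some trinomials are solvable by radicals. To make the step rigorous, the first approach would be to exhibit a concrete choice of data, for example $s_i$ in ratio $0:1:5$ with integer coefficients, that gives a specific quintic such as $y^5 - y + 1$-type with Galois group $S_5$. This shows that no formula in radicals can cover all admissible data. Even then, the result is only about radical solutions. Solutions in terms of Bring radicals or hypergeometric functions remain possible, so the theorem has to be read as "no elementary closed form".
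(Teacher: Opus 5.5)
Your Steps 1--2 reproduce the paper's reduction: pairwise subtraction, division to eliminate $A_{\text{max}}\alpha$, rewriting as a sum of three exponentials in $\ln t_i$, then Abel--Ruffini as in Theorem~\ref{theorem:exponential}. You are right that the appeal to Abel--Ruffini is the weak point, but the repair you propose in Step 3 cannot work. The coefficients $(A_1-A_2)$, $-(A_1-A_3)$, $(A_2-A_3)$ of your equation always sum to zero. So $x=1$ (i.e.\ $y=1$, $\beta=0$) is a root for \emph{every} data set. It is an extraneous solution: it appears when you eliminate $c$ and cross-multiply, because both sides vanish at $\beta=0$. Hence every trinomial $ay^m-by^p+d$ you obtain equals $(y-1)$ times a polynomial of degree $m-1$. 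For the ratio $0:1:5$, and indeed for any $m=5$, the cofactor is a quartic, which is solvable by radicals. So no data of that shape yield a non-solvable quintic, and $y^5-y+1$ is not even in your family, since its coefficients do not sum to zero. The threshold in Step 3 must be $m\ge 6$. The paper's own reduction $A_1y-A_2y^n=A_1-A_2$ also always has the root $y=1$, so its ``$n<5$'' is off by one in the same way.

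The gap closes with one more degree. Take $t=(1,2,64)$, i.e.\ ratio $0:1:6$, and equally spaced accuracies $A=(0.5,0.6,0.7)$. With $y=2^{-\beta}$ your equation becomes $y^6-2y+1=(y-1)(y^5+y^4+y^3+y^2+y-1)=0$. The quintic factor is irreducible modulo $5$, and modulo $3$ it factors as an irreducible quadratic times an irreducible cubic. So its Galois group over $\mathbb{Q}$ contains a $5$-cycle and a transposition, and is therefore $S_5$. Its unique positive root $y_0\approx 0.509$ gives a genuine power-law fit through the three points ($\beta\approx 0.98$, $\alpha\approx 0.29$, $A_{\text{max}}\approx 0.70$). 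Hence $t_2^{-\beta}$ is not expressible in radicals over $\mathbb{Q}$, and no radical formula in the data can cover all inputs. With this witness Step 4 is logically unnecessary; as written it is also an unproved assertion, as is the paper's remark. Finally, state the conclusion accurately. This argument proves there is no closed form \emph{in radicals} for $t_2^{-\beta}$. Ruling out formulas that also use $\exp$ and $\log$ (``no elementary closed form'') is a stronger claim that Abel--Ruffini does not give. It would need something like a monodromy (topological Galois theory) argument for the family $y^5+y^4+y^3+y^2+y=c$.
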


\begin{proof}
    As this equation has three parameters, exactly three data points $\{(t_1, A_1), (t_2, A_2), (t_3, A_3)\}$ are required to solve for $\alpha$, $\beta$, and $A_{\text{max}}$. Substituting into Equation \ref{eq:exponential} we obtain a series of three simultaneous equations:
    \begin{equation*}
        A_1=A_{\text{max}}(1 - \alpha t_1^{-\beta})
    \end{equation*}
    \begin{equation*}
        A_2=A_{\text{max}}(1 - \alpha t_2^{-\beta})
    \end{equation*}
    \begin{equation*}
        A_3=A_{\text{max}}(1 - \alpha t_3^{-\beta})
    \end{equation*}
    Pairwise subtraction of these equations gives:
    \begin{equation*}
        A_3-A_1=A_{\text{max}}\alpha (t_1^{-\beta} - t_3^{-\beta})
    \end{equation*}
    \begin{equation*}
        A_3-A_2=A_{\text{max}}\alpha (t_2^{-\beta} - t_3^{-\beta})
    \end{equation*}
    Then division of these equations gives:
    \begin{equation*}
        \frac{A_3-A_1}{A_2-A_1} = \frac{t_1^{-\beta} - t_3^{-\beta}}{t_2^{-\beta} - t_3^{-\beta}}
    \end{equation*}
    For simplicity let $C=\frac{A_3-A_1}{A_2-A_1}$, then we can rewrite this as:
    \begin{equation*}
        C t_2^{-\beta} - C t_3^{-\beta} = t_1^{-\beta} - t_3^{-\beta}
    \end{equation*}
    Which can be rewritten as:
    \begin{equation*}
        (1-C)e^{-\beta \ln(t_3)} + Ce^{-\beta \ln(t_2)} - e^{-\beta \ln(t_1)} = 0
    \end{equation*}
    Which by the arguments made in the proof of Theorem \ref{theorem:exponential} has no general solution according to the Abel-Ruffini impossibility theorem.
\end{proof}

We also note that it is highly unlikely that three data points chosen for fitting for $\alpha$, $\beta$ and $A_{\text{max}}$ are unlikely to fall exactly on a $A = A_{\text{max}}(1 - \alpha t^{-\beta})$ curve as this curve cannot be reshaped to match any three arbitrary points.

\section{Appendix - Least Square Fitting with Uncertainty}
\label{appendix:least_squares}
We aim to model the learning curve of an ML model. 
We the task is to fit set of data $=\{(t_1, A_1), (t_2, A_2), ..., (t_m, A_m)\}$ to the learning curve function $f(t_i,\theta)$. 
$t_i$ are the computional costs at each point in the training, $A_i$  are the corresponding accuracies ans $\theta$ are the parameters to fit the learning curve model.
For the exponential fit $\theta = (A_{\text{max}},k)$, for the power-law model $\theta = (A_{\text{max}},\alpha,\beta)$, and for delayed exponential fit $\theta = (A_{\text{max}},k,t_0)$.

The predicted accuracy for a data point is given by:
\begin{equation*}
    A_i' = f(t_i, \theta)
\end{equation*}
Where $f()$ is the learning curve fitting function given by either Equation \ref{eq:exponential}, \ref{eq:delayed_exponential}, or \ref{eq:powerlaw}. 
For example for the delayed exponential fitting function, the predicted value $A_i'$ is given by:
\begin{equation*}
    A_i' = A_{\text{max}}(1-e^{-k(t_i-t_0)})
\end{equation*}
The objective of fitting is to find the optimal parameters $\hat{\theta}$ that minimises the sum of square residuals denoted $R^2(\theta)$ given by:
\begin{equation}
    R^2(\theta) = \sum_{i=1}^{m}(A_i - A_i')^2
\end{equation}
Because $f(t_i,\theta)$ is non-linear, we cannot solve for $\hat{\theta}$ analytically and instead use an iterative, numerical method which is implemented by \texttt{scipy.optimize.curve\_fit} as follows.
At each iteration, the algorithm linearises $f(t_i,\theta)$ around $\theta$ using the $m \times p$ Jacobian matrix $\mathbf{J}$, where $p$ is the number of parameters. 
Each entry of the Jacobian is defined as follows:
\begin{equation}
    \mathbf{J} = \frac{\partial f(t_i,\theta)}{\partial \theta_j}
\end{equation}
$\mathbf{J}$ determines the direction and magnitude of a parameter update required to reach the minimum of $R^2(\theta)$, thus obtaining $\hat{\theta}$ by gradient descent.

Once $\hat{\theta}$ has been reached, $\mathbf{J}$ is used to quantify the uncertainty in the learned parameters, by producing a co-variance matrix, $\mathbf{C}$ as follows:
\begin{equation*}
    \mathbf{C} = \frac{R^2(\hat{\theta})}{n-p}(\mathbf{J}^T\mathbf{J})^{-1}
\end{equation*}
The uncertainty (standard error) for each parameter is obtained using the diagonal elements of the covariance matrix, $C_{jj}$, using the formula:
\begin{equation}
    \label{eq:uncertainty}
    \Delta \theta_j = \sqrt{C_{jj}}
\end{equation}
We use the values produced from Equation \ref{eq:uncertainty} as estimates of the uncertainty in the learned parameters for model fitting.

\section{Appendix - Additional Experimental Results for Learning Curve Fitting}
\label{appendix:additional_results}


\begin{figure}[htbp]
    \centering
    \includegraphics[width=\textwidth]{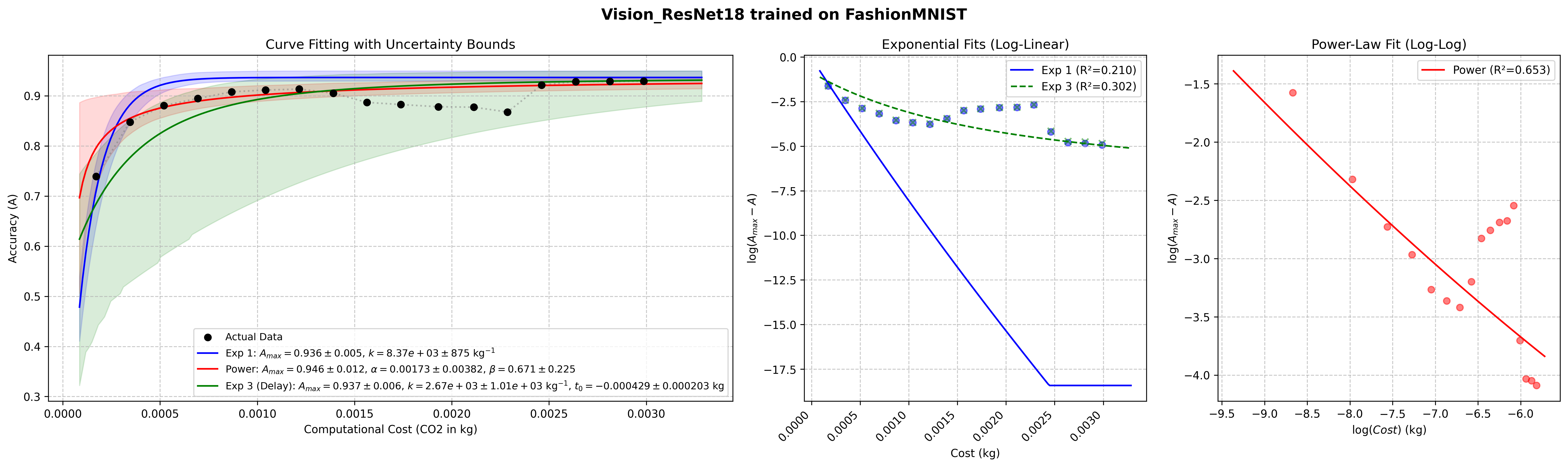}
    \caption{Learning curve and mathematical fits evaluating the computational efficiency of ResNet18 trained on the FashionMNIST dataset.}
    \label{fig:vision_resnet18}
\end{figure}


\begin{figure}[htbp]
    \centering
    \includegraphics[width=\textwidth]{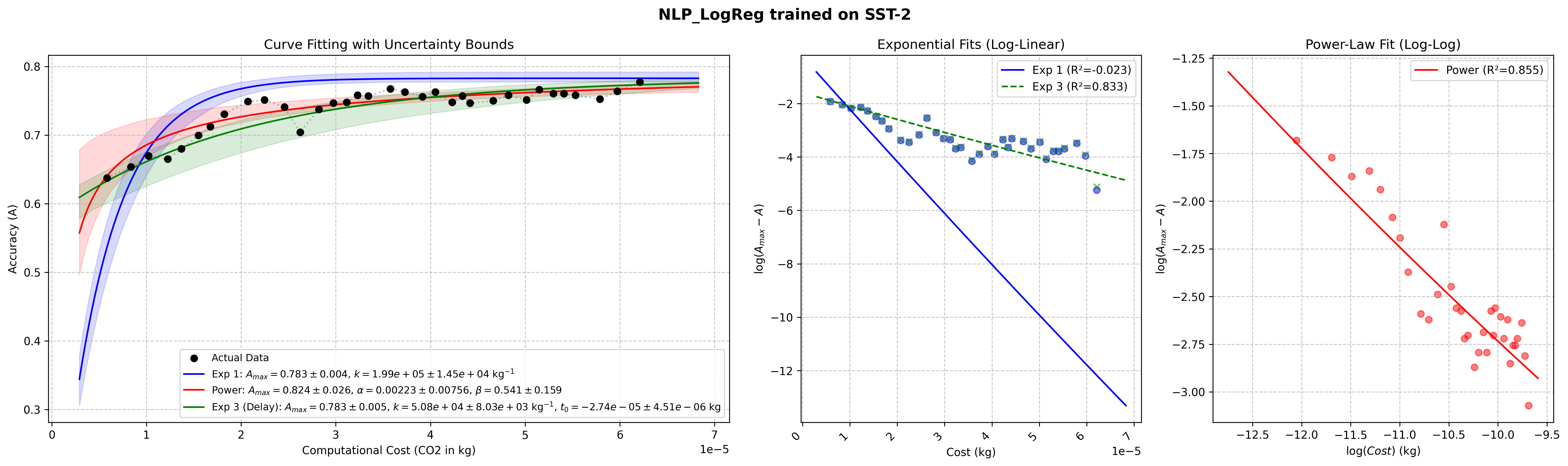}
    \caption{Learning curve and mathematical fits evaluating the computational efficiency of Logistic Regression on the SST-2 text classification dataset.}
    \label{fig:nlp_logreg}
\end{figure}

\begin{figure}[htbp]
    \centering
    \includegraphics[width=\textwidth]{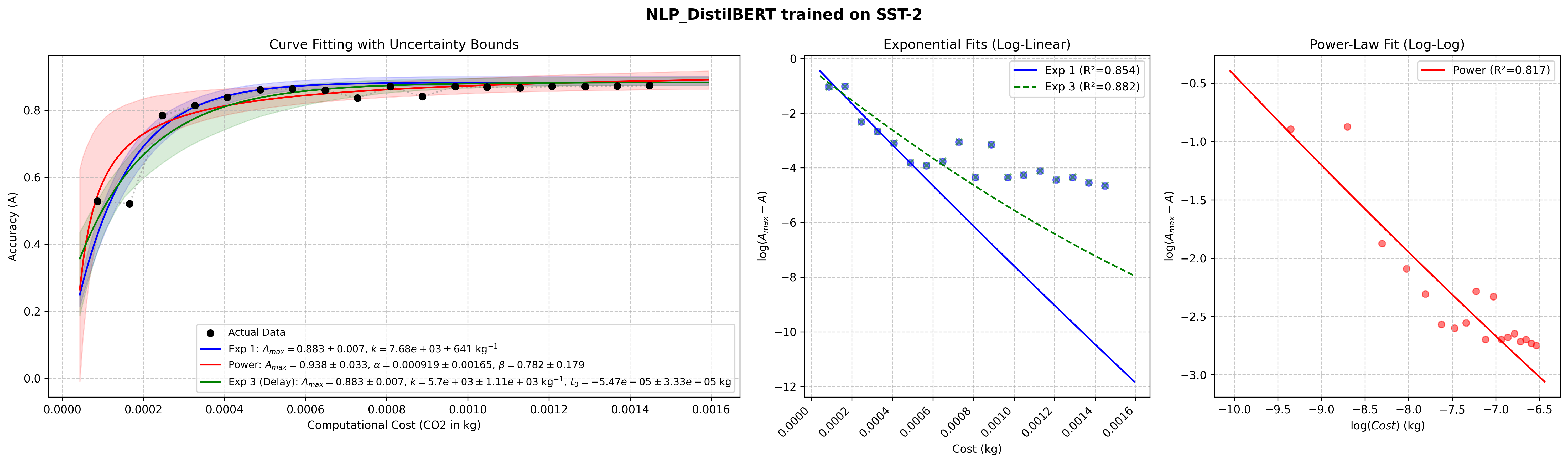}
    \caption{Learning curve and mathematical fits evaluating the computational efficiency of DistilBERT fine-tuned on the SST-2 dataset.}
    \label{fig:nlp_distilbert}
\end{figure}


\begin{figure}[htbp]
    \centering
    \includegraphics[width=\textwidth]{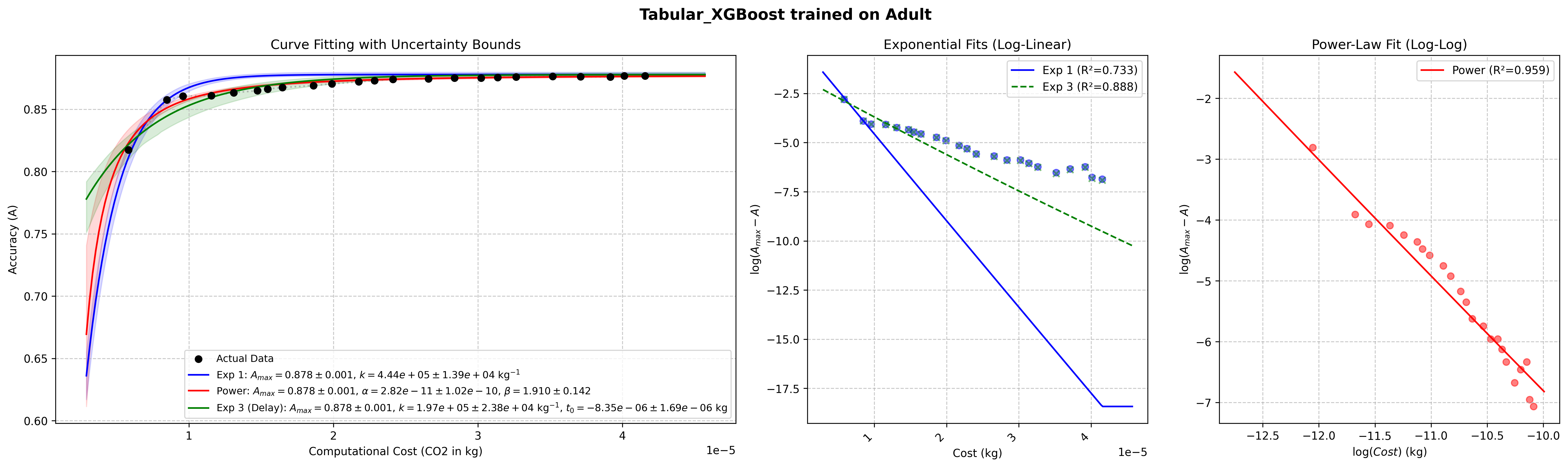}
    \caption{Learning curve and mathematical fits evaluating the computational efficiency of XGBoost trained on the Adult tabular dataset.}
    \label{fig:tabular_xgboost}
\end{figure}

\begin{figure}[htbp]
    \centering
    \includegraphics[width=\textwidth]{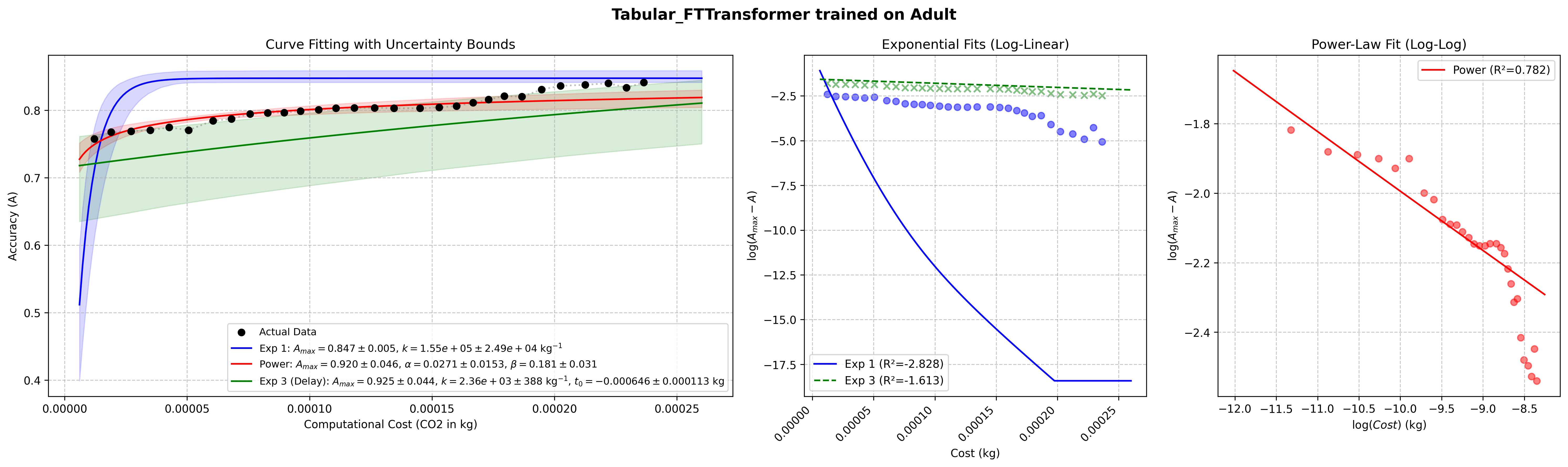}
    \caption{Learning curve and mathematical fits evaluating the computational efficiency of FT-Transformer trained on the Adult tabular dataset.}
    \label{fig:tabular_fttransformer}
\end{figure}

\FloatBarrier

\section{Appendix - Literature Review Results and Prompt}
\label{appendix:literature_review}

When comparing models, in addition to randomly selecting 200 papers, we also systematically chose the first 200 papers (by author name). We found that there was no significant difference between the two samples. 

The full prompt given to the models to perform the review is given below:

{\tt 
\begin{quote}
\small
\begin{verbatim}
Please read the attached academic article and then follow the exact
instructions below to answer questions about it.

1.) Write "Start:"

2.) Then write "Paper uses LLMs: "

3.) Then write either "Yes" or "No"

4.) Then write "Type of LLM Usage: "

5.) Then write at least one of the following separated by commas if
you choose more than one:

- "Architecture/Pre-training" if the paper proposes a new model
- "Fine-tuning" if the paper adjusts an existing model for a specific task
- "Inference/Evaluation" if the paper uses the LLM as a judge for
  labelling data
- "Synthetic Data Generation" if the paper uses LLMs to create
  training data
- "Agentic Frameworks" if the LLM is used for reasoning, solving a
  problem or completing a task

You may select more than one of these options.

If your answer to task 3 was "No" then instead write "None" for
this task.

6.) Write "Open Source: "

7.) Then write "Yes" if the paper uses only open source LLMs where
the number of parameters are known. Write "No" if the paper does not
use LLMs or uses any proprietary models.

8.) Then write "Training an ML Model on a standard benchmark: "

9.) Then write "Yes" if the paper trains a machine learning model on
a standard benchmark.

10.) Write "Telemetry Available: "

11.) Then write "Yes" if the paper says the authors ran models on
hardware or infrastructure they directly controlled or rented at the
VM/instance level and could therefore estimate their compute.
Write "No" if they used a managed platform or API that abstracts the
hardware and reports only high-level usage.
Write "Unclear" if this cannot be determined from the paper and "NA"
if the paper did not require any compute.

12.) Write "Computational Cost: "

13.) Then write "Yes" if the paper reports the computational cost of
its methods (even if the paper is not about LLMs).
Write "No" if computational cost is not mentioned in the paper.
Write "NA" if the paper did not perform any experiments or use any
compute.

14.) Then write "Sustainability: "

15.) Then write "Yes" if the carbon footprint or other environmental
impacts were considered as part of the paper or "No" if these were
not mentioned in the paper.

16.) Write ":End:"
\end{verbatim}
\end{quote}
}

\section{Appendix - Statistical Comparison of LLMs in Literature Review Task}\label{section:appendix_statistics}

To identify whether individual LLMs differed significantly from a SotA baseline model (here chosen to be Gemini 3.1 Pro Preview), we used a Generalized Linear Mixed Model (GLMM).  For a set of 200 papers, each LLM was asked to answer a series of questions about each paper where each of the questions had a yes/no answer that could be modeled as a Bernoulli response variable.  The GLMM was formulated as:

\begin{align*}
Y_{ijk} &\sim \textup{Bernoulli}(p_ijk) \\
\textup{logit}(p_{ijk}) &= \eta_{ijk} \\
\eta_{ijk} &= \alpha + \beta_i + \gamma_j + r_{k} \\
\end{align*}

\noindent where $Y_{ijk}$ denotes the response of the $i$th LLM to the $j$th question, for the $k$th paper; $\textup{logit}(p) := \frac{p}{1 - p}$; $\alpha$ is the intercept parameter of the linear model; $\beta_i$ is the fixed effect of the $i$th LLM on the response variable outcome; $\gamma_j$ is the fixed effect of the $j$th question on the response outcome; and $r_{k} \sim N (0, \sigma_r^2)$ is a random effect that describes the effect of the $k$th paper on the response outcome.

The GLMM was fitted to the experimental data using the packages {\tt glmmTMB} \cite{brooks2017glmmtmb} in the {\tt R} statistical programming language \cite{Rcore2026}.  Since we wished to compare each LLM to a SotA benchmark model, we used Dunnett's test to do multiple comparisons.  In this case, assessing whether each LLM's effect in the GLMM was significantly different to that of Gemini 3.1 Pro Preview.  Dunnett's test was performed in {\tt R} using the {\tt emmeans} \cite{lenth2024emmeans} package.  The results of the multiple comparisons are presented in Table \ref{table:Dunnett}.  Based on the p-values in this table, we see no strong evidence to suggest that Haiku 4.5, Claude Opus 4.6, Claude Sonnet 4.6, or Gemini 3.1 Flash Lite are significantly different to the Gemini 3.1 Pro Preview model.  On these grounds we retain the null hypothesis that there is no evidence of a difference between these four models and Gemini 3.1 Pro Preview model.

\begin{table}
\caption{Pairwise comparisons of LLMs to the chosen baseline model (Gemini 3.1 Pro Preview) using Dunnett's test.}
\label{table:Dunnett}
\centering
\begin{tabular}[t]{lrrrl}
\toprule
contrast & odds.ratio & SE & z.ratio & p.value\\
\midrule
Haiku 4.5 / Gemini 3.1 Pro Preview & 1.083 & 0.130 & 0.6623202 & 0.9396\\
Claude Opus 4.6 / Gemini 3.1 Pro Preview & 0.801 & 0.099 & -1.7892710 & 0.3223\\
Claude Sonnet 4.6 / Gemini 3.1 Pro Preview & 1.068 & 0.129 & 0.5426289 & 0.967\\
Gemini 3.1 Flash Lite / Gemini 3.1 Pro Preview & 1.154 & 0.138 & 1.1973763 & 0.6921\\
Gemini 3 Flash Preview / Gemini 3.1 Pro Preview & 0.500 & 0.065 & -5.3050248 & <0.0001\\
\addlinespace
Gemma 3.12b / Gemini 3.1 Pro Preview & 1.624 & 0.190 & 4.1447230 & 2e-04\\
Gemma 3.27b / Gemini 3.1 Pro Preview & 2.047 & 0.237 & 6.1977542 & <0.0001\\
\bottomrule
\end{tabular}
\end{table}


\end{document}